%% file: eccv2012submission.tex
\documentclass[runningheads,a4paper]{llncs}

\usepackage{amssymb}
\usepackage{graphicx}

\usepackage{amsmath}
\usepackage{color}
\usepackage{multirow}
\usepackage{empheq}
\usepackage{url}

\renewcommand{\div}{\operatorname{div}}

\urldef{\mailsa}\path|{firstname.lastname}@liu.se|

\begin{document}

\mainmatter  % start of an individual contribution

% first the title is needed
\title{On Tensor-Based PDEs and their Corresponding Variational Formulations with Application to Color Image Denoising}

% a short form should be given in case it is too long for the running head
\titlerunning{On Tensor-Based PDEs and their Corresponding Variational Formulations}

% the name(s) of the author(s) follow(s) next
%
% NB: Chinese authors should write their first names(s) in front of
% their surnames. This ensures that the names appear correctly in
% the running heads and the author index.
%
\author{Freddie \AA str{\"o}m\inst{1,2}\and George Baravdish\inst{3}\and Michael Felsberg\inst{1,2}}

\authorrunning{On Tensor-Based PDEs and their Corresponding Variational Formulations}
% (feature abused for this document to repeat the title also on left hand pages)

% the affiliations are given next; don't give your e-mail address
% unless you accept that it will be published
\institute{Computer Vision Laboratory, Department of E.E., Link{\"o}ping University
\and Center for Medical Image Science and Visualization (CMIV), Link{\"o}ping University
\and Department of Science and Technology, Link{\"o}ping University \\ \mailsa} 

% \institute{Springer-Verlag, Computer Science Editorial,\\
% Tiergartenstr. 17, 69121 Heidelberg, Germany\\
% \mailsa\\
% \url{http://www.springer.com/lncs}}

%
% NB: a more complex sample for affiliations and the mapping to the
% corresponding authors can be found in the file "llncs.dem"
% (search for the string "\mainmatter" where a contribution starts).
% "llncs.dem" accompanies the document class "llncs.cls".
%

\toctitle{Lecture Notes in Computer Science}
\tocauthor{Authors' Instructions}
\maketitle

\begin{abstract}
  The case when a partial differential equation (PDE) can be considered as an Euler-Lagrange (E-L) equation of an energy functional, consisting of a data term and a smoothness term is investigated. We show the necessary conditions for a PDE to be the E-L equation for a corresponding functional. This energy functional is applied to a color image denoising problem and it is shown that the method compares favorably to current state-of-the-art color image denoising techniques. 
\end{abstract}

\section{Introduction}
In their seminal work Perona and Malik proposed a non-linear diffusion PDE to filter an image while retaining lines and structure \cite{Perona90scale-spaceand}. This triggered researchers to investigate well-posedness of the underlying PDE, but also to build on the drawbacks of the Perona and Malik formulation, namely that noise is preserved along lines and edges in the image structure. A tensor-based image diffusion formulation, which gained wide acceptance, was proposed by Weickert \cite{Weickert96anisotropicdiffusion} where the outer product of the gradient was used to describe the local orientation of the image structure. As the main novelty of this work we investigate the case when a given image diffusion PDE can be considered as a corresponding E-L equation to a functional of the form
\begin{equation}
  J(u) = \frac{1}{p}||u - u_0||^p + \mathcal{R}(u) \enspace ,
  \label{eq:main_functional}
\end{equation}
where $\mathcal{R}(u)$ is a \emph{tensor-based} smoothness term, and the $L^p$-norm, $1 < p < \infty$, is a data term. Furthermore, the derived E-L equation of \eqref{eq:main_functional} is applied to a color image denoising problem.

%\subsubsection{Related work}
\noindent \textbf{Related work} State-of-the-art gray-value image denoising achieves impressive results. However, the extention from gray-valued image processing to color images still pose major problems due to the unknown mechanisms which govern the process describing color perception. One of the most common color spaces used to represent images is the RGB color space, despite it being known that the colors (red, green and blue) are correlated due to physical properties such as the design of color sensitive filters in camera equipment. In \cite{Weickert1999201} anisotropic diffusion in the RGB color space is proposed, where the color components are used to derive a weighted diffusion tensor. One drawback of this formulation is that color artifacts can appear on the boundary of regions with sharp color changes. Some alternative color spaces investigated for image diffusion include manifolds \cite{661181}, the CMY, HSV, Lab and Luv color space \cite{Renner2003}, luminance and chormaticity \cite{918563}, and a maximal decorrelation approach \cite{diva2:424137}. Due to the simplicity of its application we follow the color model used in \cite{diva2:424137} which was developed in \cite{Lenz2010}. 

% Manifold Sapiro ,541429
% Researchers have been working on relating diffusion processes to variational formulations. 

With regards to variational approaches of image diffusion processes, it has been shown that non-linear diffusion and wavelet shrinkage are both related to variational approaches \cite{springerlink:10.1007/BF00131148}. Also, anisotropic diffusion has been described using robust statistics of natural images \cite{1238435}. However, in \cite{1238435} they construct an update scheme for the iterative anisotropic process which contains the addition of a an extra convolution of the diffusion tensor. In the paper \cite{5539959} a tensor-based functional is given, but the tensor it describes is static, thus the update-scheme becomes linear. Generalizations of the variational formulation exists, \emph{i.e.} in \cite{Baravdish11} the smoothness term is considered as an $L^{p}$-norm where $p$ is an one-dimensional function. 

The main contributions of this paper are: (1) we establish a new tensor-based variational formulation for image diffusion in Theorem 1; (2) in Theorem 2 we derive necessary conditions such that it is possible to find an energy functional given a tensor-based PDE; (3) the derived E-L equation of the established image diffusion functional is applied to a color image denoising problem. It is shown that the resulting denoising compares favorably to state-of-the-art techniques such as anisotropic diffusion \cite{Weickert1999201}, tracebased diffusion \cite{5696757,diva2:424137} and BM3D \cite{4378954}. 

In section 2 we briefly review both scalar and tensor-based image diffusion filtering as a motivation for the established functional. Furthermore, we state the necessary conditions for a PDE to be considered as an E-L equation to the same functional. The derived functional is applied to a color image denoising problem in section 3. The paper is concluded with some final remarks.  

\section{Image diffusion and variational formulation}
Convolving an image with a Gaussian filter reduces the noise variance, but does not take the underlying image structure such as edges and lines into account. To avoid blurring of lines and edges Perona and Malik \cite{Perona90scale-spaceand} introduced an ``edge-stopping'' function in the solution of the heat equation
\begin{equation}
  \partial_t u = \div (g(|\nabla u|^2)\nabla u) \enspace ,
  \label{eq:divPM}
\end{equation}
with diffusivity function $g$, and $\nabla u = \begin{pmatrix} u_x, & u_y \end{pmatrix}^t$ where $u_x$ and $u_y$ denote the derivative of $u$ with respect to $x$ and $y$. If $g = I$ the heat equation is obtained. The Perona-Malik formulation produces visually pleasing denoising results and preserves the image structure for a large number of iterations. However the formulation only depends on the absolute image gradient and hence noise is preserved at edges and lines. Weickert \cite{Weickert96anisotropicdiffusion} replaced the diffusivity function with a tensor. This allows the filter to reduce noise along edges and lines and simultaneously preserve image structure. The tensor-based image diffusion formulation is 
\begin{equation}
  \partial_t u = \div (D \nabla u) \enspace ,
  \label{eq:divD}
\end{equation}
where $D = D(T_s)$ is a positive-semidefinite diffusion tensor constructed by scaling the eigenvalues of the structure tensor $T_s$,
\begin{equation}
  T_s = w *( \nabla u \nabla u^t ) \enspace ,
  \label{eq:Tw}
\end{equation}
where $*$ denotes the convolution operation and $w$ is typically a Gaussian weight function \cite{Forstner87}. % % 

\subsection{Scalar diffusion}
The functional 
\begin{equation}
  J(u) = \iint_\Omega \frac{1}{2}(u - u_0)^2 dxdy+ \lambda \iint_\Omega \Phi(|\nabla u|) \; dxdy \enspace , 
  \label{eq:functionalEX}
\end{equation}
where $\Phi$ is a strict convex function and $\lambda$ is a scalar, describes a nonlinear minimization problem for image denoising. By using the G{\^a}teaux derivative and the divergence theorem, the solution $u$ which minimizes $J(u)$ is the solution to the E-L equation
\begin{equation}
   \frac{\partial J(u)}{\partial u} = 0 \;\;\; \mbox{in} \quad \Omega, \quad \nabla u \cdot n = 0 \quad \mbox{on} \quad \partial \Omega \enspace ,
 \end{equation}
where $\Omega$ is a grid described by the image size in pixels and $n$ is the normal vector on the boundary $\partial \Omega$. The functional $J(u)$ satisfies the initial-boundary value problem for the PDE
\begin{align}
  \left\{
    \begin{array}{l l}
      \partial_t u - \div \left ( \dfrac{\Phi'(|\nabla u|)}{|\nabla u|}
        \nabla u\right ) = 0 & \quad \text{in } \Omega, t > 0  \\[1mm]
      \partial_n u = 0                                   & \quad  \text{on } \partial \Omega, t > 0  \\[1mm]
      u(x,y,0) = u_0(x,y)                                & \quad \text{in } \Omega
    \end{array} \right.
  \label{eq:div_scalar}
\end{align}
It is commonly accepted that there is no known way of determining a functional such that $\Phi$ is any arbitrary function. However, it is possible to derive a corresponding functional by equating the non-specified diffusivity function $g$ with the known E-L equation \eqref{eq:div_scalar}
\begin{equation}
  g(|\nabla u|) = \frac{\Phi'(|\nabla u|)}{|\nabla u|} \enspace ,
  \label{eq:scalardiffusion}
\end{equation}
hence, with $s = |\nabla u|$ one gets $\Phi(s) = \int sg(s)\,ds = sG(s) - G_1(s)+C$
%\begin{equation}
%  \Phi(s) = \int sg(s)\,ds = sG(s) - G_1(s)+C \enspace ,
%\end{equation}
where $G_1'=G$, $G'=g$ and the constant $C$ is chosen so that  $\Phi$ is non-negative. For example if $g(s) = s^{p-2}$ then $\Phi(s) = \frac{|\nabla u|^p}{p}$ and similarly if $g(s) = e^{-s}$ then $\Phi = -s e^{-s} - e^{-s} + C$ where $C \geq 1$. Naturally, these results carry over to other types of diffusivity functions such as the popular negative exponential diffusivity function and the Perona-Malik function $g(s)=\frac{1}{1+s}$.

% however for specific functions of interest we show that a functional does exist.

\subsection{Deriving a new tensor-based functional}
To achieve better denoising results close to lines and edges than the scalar diffusion formulation enables, it is common to define the tensor $T$ which describes local orientation of the image structure. Hence the PDE in \eqref{eq:div_scalar} reads
\begin{align}
  \left\{
    \begin{array}{l l}
      \partial_t u -  \div(T \nabla u) = 0                & \quad \text{in } \Omega, t > 0 \\[1mm]
      \partial_n u = 0                                    & \quad \text{on } \partial \Omega, t > 0  \\[1mm]
      u(x,y,0) = u_0(x,y)                                 & \quad \text{in } \Omega
    \end{array} \right.
  \label{eq:divtensor}
\end{align}
where $T(\nabla u)$ depends on the image gradient. Note that the common formulation of the tensor $T$ is to let it depend on $(x,y) \in \Omega$. The ansatz to derive the new tensor-based image diffusion functional is based on equating \eqref{eq:div_scalar} and \eqref{eq:divtensor} such that 
\begin{equation}
  \div(T \nabla u) = \div\left(\frac{\Phi'(|\nabla u|)}{|\nabla u|}\nabla u\right) \enspace .
  \label{eq:divequality}
\end{equation} 
The equality in \eqref{eq:divequality} holds if 
\begin{equation}
  T \nabla u = \frac{\Phi'(|\nabla u|)}{|\nabla u|}\nabla u + C \enspace , % \nabla \times C \enspace %,
  \label{eq:divequality2}
\end{equation} 
for some vector-field $C$ where $\div(C) = 0$. In this work we consider $C = 0$. Here extending the equation with the scalar product from the left side, and integrating the result with respect to $|\nabla u|$ we find $\Phi$ as
\begin{equation}
  \Phi(|\nabla u|) = \int \frac{\nabla u^t T \nabla u}{|\nabla u|} \; d|\nabla u| \enspace .
  \label{eq:phiGeneral}
\end{equation}

Using this result, we motivate the subsequent theorem by performing the integration with a tensor defined by the outer product of the gradient, then
%\begin{equation}
$\Phi(|\nabla u|) = \int |\nabla u|^3\,d|\nabla u| = \frac{1}{4} |\nabla u|^4 +c$ %  \enspace ,
%\end{equation}
where $c$ is a constant. Setting the constant to null we obtain the tensordriven functional
\begin{equation} 
  J(u) = \iint_{\Omega} \frac{1}{2} (u-u_0)^2 \;dxdy + \lambda \iint_\Omega\frac{1}{4} \nabla u^t T \nabla u \,dxdy \enspace .
\end{equation}
Thus, a generalization of this energy functional is now stated in the following theorem.
\begin{theorem}
  Let $u_0$ be an observed image in a domain $\Omega \subset \mathbb{R}^2$, and let $T(u_x,u_y)$ describe the local orientation of the underlying structure of $u$ in $\Omega$. Denote by $J(u)$ the functional 
  \begin{equation}
    J(u) =  \frac{1}{p} ||u-u_0||^p + \lambda \iint_{\Omega} \nabla u^t T(\nabla u) \, \nabla u \,dxdy
    \label{eq:mainFunctional}
  \end{equation}
  where $u \in C^2$ and $1 < p < \infty$. Then the E-L equation of $J(u)$ is
%  \begin{empheq}[left={\empheqlbrace}]{align}
%  %   \left\{
% %      \begin{array}{r l}
%         (u - u_0)^{p-1} - \lambda \div \left( S \nabla u \right)  = 0  & \quad \text{in } \Omega \\[1mm]
%         \partial_n u  = 0                                    & \quad  \text{on } \partial \Omega 
% %      \end{array} % \right.
   
% \end{empheq}
 \begin{align}
    \left\{
      \begin{array}{r l}
%        (u - u_0)^{p-1} + \lambda \div \left( \begin{pmatrix} \nabla u^t T_{u_x} \nabla u \\  \nabla u^t T_{u_y} \nabla u  \end{pmatrix} + T\nabla u + T^t \nabla u \right) = 0  & \quad \text{in } \Omega \\[1mm]
        (u - u_0)^{p-1} - \lambda \div \left( S \nabla u \right)  = 0  & \quad \text{in } \Omega \\[1mm]
        \partial_n u  = 0                                    & \quad  \text{on } \partial \Omega  \\[1mm]
%        u(x,y,0) = u_0(x,y)                                  & \quad \text{in } \Omega    
      \end{array} \right.
    \label{eq:ELproposed}
  \end{align}
  where 
  \begin{equation}
  	S = \begin{pmatrix} \nabla u^t T_{u_x}  \\  \nabla u^t T_{u_y}  \end{pmatrix} + T + T^t \enspace ,
  \end{equation}
  and $T_{u_x}$, $T_{u_y}$ are the derivatives of $T(u_x, u_y)$ with respect to $u_x$ and $u_y$.
%  and $T = T(\nabla u) = T(u_x, u_y) = T(u_x,u_y)$.
\end{theorem}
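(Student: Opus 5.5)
We compute the Gâteaux derivative of $J$ in an arbitrary direction $v \in C^2(\Omega)$, i.e. $\frac{d}{d\epsilon} J(u+\epsilon v)\big|_{\epsilon=0}$, and then move all derivatives off $v$ with the divergence theorem. This is the same route used for the scalar functional \eqref{eq:functionalEX}. Write $F(u_x,u_y) = \nabla u^t T(u_x,u_y)\nabla u$ for the smoothness integrand, and read the data term as $\frac1p\iint_\Omega |u-u_0|^p\,dxdy$.

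\textbf{Data term.} Differentiating under the integral sign gives $\iint_\Omega |u-u_0|^{p-2}(u-u_0)\,v\,dxdy$. This is well defined for $1<p<\infty$, because $s\mapsto |s|^p/p$ is $C^1$ there. The factor $|u-u_0|^{p-2}(u-u_0)$ is what the statement abbreviates as $(u-u_0)^{p-1}$.

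\textbf{Smoothness term.} The perturbed gradient is $\nabla(u+\epsilon v) = \nabla u + \epsilon\nabla v$. By the product rule, the derivative of $F$ at $\epsilon=0$ has two kinds of contributions.
\begin{itemize}
\item The explicit quadratic factors give $\nabla v^t T\nabla u + \nabla u^t T\nabla v = \nabla v^t (T+T^t)\nabla u$.
\item The dependence of $T$ on the gradient gives, by the chain rule, $\nabla u^t\,(v_x T_{u_x} + v_y T_{u_y})\,\nabla u = \nabla v^t \begin{pmatrix}\nabla u^t T_{u_x}\nabla u\\ \nabla u^t T_{u_y}\nabla u\end{pmatrix}$.
\end{itemize}
Collecting both, the derivative of the smoothness term is $\lambda\iint_\Omega \nabla v\cdot (S\nabla u)\,dxdy$, with $S$ as defined in the statement. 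The first block of $S$ is the $2\times 2$ matrix whose rows are $\nabla u^t T_{u_x}$ and $\nabla u^t T_{u_y}$. Applied to $\nabla u$, it reproduces the vector above.

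\textbf{Integration by parts.} The divergence theorem gives
\[
\iint_\Omega \nabla v\cdot(S\nabla u)\,dxdy = \oint_{\partial\Omega} v\,(S\nabla u)\cdot n\,ds - \iint_\Omega v\,\div(S\nabla u)\,dxdy.
\]
So the first variation equals
\[
\iint_\Omega \big[(u-u_0)^{p-1} - \lambda\div(S\nabla u)\big]v\,dxdy + \lambda\oint_{\partial\Omega} v\,(S\nabla u)\cdot n\,ds .
\]
Requiring this to vanish first for all $v$ with compact support gives, by the fundamental lemma of the calculus of variations, the interior equation in \eqref{eq:ELproposed}. Requiring it to vanish for arbitrary boundary values of $v$ then gives the natural boundary condition $(S\nabla u)\cdot n = 0$ on $\partial\Omega$.

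\textbf{Main obstacle.} The delicate step is reducing this natural condition to $\partial_n u = 0$ as the statement asserts. The condition $(S\nabla u)\cdot n=0$ is implied by $\nabla u\cdot n = 0$ only under extra structure.
\begin{itemize}
\item One sufficient case is when $T$ depends on $\nabla u$ through the outer product $\nabla u\nabla u^t$ (or a smoothing of it). Then $S\nabla u$ is parallel to $\nabla u$ and the two conditions coincide.
\item Otherwise, I would impose $\partial_n u=0$ as an essential boundary condition, in line with \eqref{eq:div_scalar} and \eqref{eq:divtensor}, and restrict the admissible variations accordingly.
\end{itemize}
I would state explicitly which of these is intended. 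The remaining steps are routine chain-rule bookkeeping, assuming $T$ is $C^1$ in $(u_x,u_y)$ so that the interchange of derivative and integral is justified.
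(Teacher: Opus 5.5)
Your derivation of the interior equation follows the paper's proof. It uses the G{\^a}teaux derivative of the smoothness term, the chain rule $\varepsilon^{-1}\bigl(T(\nabla u+\varepsilon\nabla v)-T(\nabla u)\bigr)\to T_{u_x}v_x+T_{u_y}v_y$, collects everything as $\iint_\Omega\nabla v\cdot(S\nabla u)\,dxdy$, and applies Green's formula. Your bookkeeping is correct, including reading $(u-u_0)^{p-1}$ as $|u-u_0|^{p-2}(u-u_0)$. It is also cleaner than the paper's intermediate ``adjoint'' line, which is garbled as written and only corrected in \eqref{eq:scalar3}.

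You are also right that $\partial_n u=0$ does not follow in general, and the paper does not settle this either. It takes test functions with $\partial_n v=0$ and writes \eqref{eq:scalar1}--\eqref{eq:scalar3} without boundary terms, but $\partial_n v=0$ does not remove $\oint_{\partial\Omega}v\,(S\nabla u)\cdot n\,ds$.

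For the same reason your second remedy fails. Restricting variations to $\partial_n v=0$ still leaves the trace of $v$ arbitrary. Stationarity therefore still forces $(S\nabla u)\cdot n=0$, which now sits on top of the imposed $\partial_n u=0$ as an overdetermined pair rather than replacing it.

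Your first remedy is the right kind of fix, but the criterion is too loose. What you need is that $S\nabla u$, which is the gradient of $F(p)=p^tT(p)p$ at $p=\nabla u$, be a nonzero multiple of $\nabla u$ whenever $\nabla u\neq 0$. Equivalently, $F(p)=\Phi(|p|)$ must be radial with $\Phi'\neq 0$. This holds for $T=\frac14\nabla u\nabla u^t$ in \eqref{eq:Tcor}, where $S\nabla u=|\nabla u|^2\nabla u$.

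Mere dependence on $\nabla u\nabla u^t$ does not suffice. For example, $T=\mathrm{diag}(u_x^2,0)$ gives $S\nabla u=(4u_x^3,0)^t$, so the natural condition is vacuous on horizontal boundary segments, where $\partial_n u=\pm u_y$ is unconstrained. The smoothed tensor $w*(\nabla u\nabla u^t)$ is nonlocal and hence outside Theorem~1. For even $w$ its flux is proportional to $\bigl(w*(\nabla u\nabla u^t)\bigr)\nabla u$, a weighted average of neighbouring gradients, which is not parallel to $\nabla u$.

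So what your argument proves, and what is true in general, is the interior equation of \eqref{eq:ELproposed} together with the natural boundary condition $(S\nabla u)\cdot n=0$. This reduces to $\partial_n u=0$ only in the radial case.
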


\begin{proof}
To find the E-L equation of $J(u)$ we define the smoothness term
\begin{equation}
	\mathcal{R}(u) = \iint_{\Omega} \nabla u^t T(\nabla u) \, \nabla u \,dxdy \enspace .
\end{equation}
%Let $v \in C^\infty(\Omega)$ be an arbitrary vector such that $\partial_n v|_{\partial \Omega} = 0$. By the definition of the G{\^a}teaux derivative it follows that
Let $v \in C^\infty(\Omega)$ be an arbitrary function such that $\partial_n v|_{\partial \Omega} = 0$. By the definition of the G{\^a}teaux derivative it follows that
\begin{align}
\notag  \frac{\mathcal{R}(u + \varepsilon v) - \mathcal{R}(u)}{\varepsilon}  
= & \left \langle \nabla v, T(\nabla u + \varepsilon \nabla v)\nabla u \right \rangle + \left \langle \nabla u, T(\nabla u + \varepsilon \nabla v)\nabla v \right \rangle  \\
\notag & +  \frac{\left \langle \nabla u, (T(\nabla u + \varepsilon \nabla v) - T(\nabla u)) \nabla u \right \rangle}{\varepsilon} \\ 
& + \varepsilon \left \langle \nabla v, T(\nabla u + \varepsilon \nabla v)\nabla v \right \rangle \enspace ,
\end{align}
%= & \varepsilon \frac{\left (\nabla v, T(\nabla u + \varepsilon \nabla v)\nabla u \right ) + \left (\nabla u, T(\nabla u + \varepsilon \nabla v)\nabla v \right ) }{\varepsilon} \\
%& +  \frac{\left (\nabla u, (T(\nabla u + \varepsilon \nabla v) - T(\nabla u)) \nabla u \right )}{\varepsilon} \\ 
%& + \varepsilon^2 \frac{\left (\nabla v, T(\nabla u + \varepsilon \nabla v)\nabla v \right )}{\varepsilon} \\
where $\langle a,b \rangle = \iint_\Omega a^t b \; dxdy$ and $a,b \in \mathbb{R}^2 \rightarrow \mathbb{R}^n$ denote the scalar product on a function space. Note that, since $T(\nabla u) = T(u_x, u_y)$, % where $u_x=u_x$ and $u_y=u_y$, we get
\begin{align}
 \lim_{\varepsilon \rightarrow 0} &\frac{T(\nabla u + \varepsilon \nabla v) -T(\nabla u)}{\varepsilon} 
%     = \lim_{\varepsilon \rightarrow 0} \frac{T(u_x + \varepsilon v_x,u_y + \varepsilon v_y) -T(u_x,u_y)}{\varepsilon}  \\
%     = & \lim_{\varepsilon \rightarrow 0} \frac{T(u_x + \varepsilon v_x,u_y + \varepsilon v_y) -T(u_x,u_y)}{\varepsilon} 
         \implies T_{u_x} v_x + T_{u_y} v_y,  \text{ as } \varepsilon \rightarrow 0 \enspace .
\end{align}
% For simplicity, we denote $T_{u_x} v_x + T_{u_y} v_y$ by $T_u v = \frac{\partial T}{\partial u}v$. 
Then the variation of the smoothness term with respect to $v$ reads % in the direction of $v$ is
\begin{align}
  \langle \delta \mathcal{R}, v \rangle = \left \langle \nabla u, (T_{u_x} v_x + T_{u_y} v_y)\nabla u \right \rangle + \langle \nabla v, T \nabla u \rangle + \langle \nabla u, T \nabla v \rangle \enspace . % \;  \text{ as } \varepsilon \rightarrow 0 \enspace .
%  \frac{\partial \mathcal{R}}{\partial u} v = \left \langle \nabla u, (T_{u_x} v_x + T_{u_y} v_y)\nabla u \right \rangle + \langle \nabla v, T \nabla u \rangle + \langle \nabla u, T \nabla v \rangle,  \quad  \text{ as } \varepsilon \rightarrow 0 \enspace .
\end{align}
Using the scalar product and the adjoint operator we obtain
%By Green's formula obtain
\begin{align}
 \langle \delta \mathcal{R}, v \rangle = \left \langle \nabla v, (T_{u_x} u_x + T_{u_y} u_y)\nabla u \right \rangle + \langle \nabla v, T \nabla u \rangle + \langle \nabla v, T^t \nabla u \rangle \enspace . %\quad  \text{ as } \varepsilon \rightarrow 0 \enspace .
%  \frac{\partial \mathcal{R}}{\partial u} v = \left \langle \nabla v, (T_{u_x} u_x + T_{u_y} u_y)\nabla u \right \rangle + \langle \nabla v, T \nabla u \rangle + \langle \nabla v, T^t \nabla u \rangle, \quad  \text{ as } \varepsilon \rightarrow 0 \enspace .
\end{align}
By Green's formula we obtain
% Now computing Green's function, with boundary condition $(T\nabla u , n)|_{\partial \Omega} = 0$, in the first scalar product, whereas the two other required that $v|_{\partial \Omega} = 0$. Thus, the scalar products now read
\begin{align}
\label{eq:scalar1}  \langle \nabla v, T \nabla u \rangle  & = - \iint_\Omega v \; \div(T \nabla u) \; dxdy \enspace , \\
\label{eq:scalar2}  \langle \nabla v, T^t \nabla u \rangle  & = - \iint_\Omega v \; \div(T^t \nabla u) \; dxdy \enspace , \\
%\label{eq:scalar3}    \iint_\Omega \left \langle  \nabla v , \begin{pmatrix} \nabla u^t T_{u_x} \nabla u \\  \nabla u^t T_{u_y} \nabla u  \end{pmatrix} \right \rangle   \; dxdy & = - \iint_\Omega v \, \div \left ( \begin{pmatrix} \nabla u^t T_{u_x} \\  \nabla u^t T_{u_y}  \end{pmatrix}  \nabla u \right )\;  dxdy \enspace .
\label{eq:scalar3}    \left \langle  \nabla v , \begin{pmatrix} \nabla u^t T_{u_x} \nabla u \\  \nabla u^t T_{u_y} \nabla u  \end{pmatrix} \right \rangle   & = - \iint_\Omega v \, \div \left ( \begin{pmatrix} \nabla u^t T_{u_x} \\  \nabla u^t T_{u_y}  \end{pmatrix}  \nabla u \right )\;  dxdy \enspace .
\end{align}
Hence, using \eqref{eq:scalar1} - \eqref{eq:scalar3} now gives
\begin{equation}
  \langle \delta \mathcal{R}, v \rangle = - \iint_\Omega v \div(S \nabla u) \; dxdy \enspace .
\end{equation}
Thus   
\begin{equation}
  S = \begin{pmatrix} \nabla u^t T_{u_x}  \\  \nabla u^t T_{u_y}  \end{pmatrix} + T + T^t \enspace ,
  \label{eq:S}
\end{equation}
in the E-L equation \eqref{eq:ELproposed}. Now by taking the derivative of the data term in $J(u)$ with respect to $u$ the proof is completed. \qed
\end{proof}

\newcommand{\imgsize}{0.19}

\section{Necessary conditions for the existence of a variational formulation}
In this section we formulate necessary conditions for the existence of an energy functional of the type in \eqref{eq:mainFunctional} given a tensor-based diffusion PDE \eqref{eq:ELproposed}. 
\begin{theorem}
  \label{prop:1}
  For a given tensor $S = S(u_x,u_y)$ with entities $s^{(i)}$ satisfying the necessary condition (NC)
  \begin{equation}
    s^{(3)} + u_x s_{u_x}^{(3)} - u_x s_{u_y} ^{(1)} = s^{(2)} + u_y s_{u_y}^{(2)} - u_y s_{u_x}^{(4)} \enspace ,
    \label{eq:NC}
  \end{equation}
  there exists a symmetric tensor $T(u_x,u_y)$ such that the tensor-based diffusion PDE
  \begin{align}
    \left\{
      \begin{array}{r l}
        (u - u_0)^{p-1} - \div \left( S \nabla u \right ) = 0 & \quad \text{in } \Omega  \\[1mm]
        \partial_n u = 0                    & \quad  \text{on } \partial \Omega  \\[1mm]
        % u(x,y,0) = u_0(x,y)                 & \quad \text{in } \Omega  \\
      \end{array} \right. 
    \label{eq:EulerS}
  \end{align}
  where $1 < p < \infty $ is the E-L equation to the functional $J(u)$ in \eqref{eq:mainFunctional}. Moreover, the entities in 
  \begin{equation}
    T(u_x,u_y) = \begin{pmatrix} f(u_x,u_y) & g(u_x,u_y) \\ g(u_x,u_y) & h(u_x,u_y) \end{pmatrix} \enspace ,
  \end{equation}
  are given below. First we consider $g(u_x,u_y) = \tilde{g}(\alpha,\beta)$, where 
  \begin{equation}
    \label{eq:g} \tilde{g}(\alpha,\beta) = \frac{1}{\alpha^2}\int \alpha \tilde{R}(\alpha,\beta)\,d \alpha + \frac{1}{\alpha^2}\tilde{\theta}(\beta) \enspace ,
  \end{equation}
 and $\alpha = u_x$, $\beta = u_x/u_y$ and $\tilde{R}(\alpha,\beta) = R(u_x,u_y)$ is the right-hand side of NC \eqref{eq:NC}. Second, $f$ and $h$ are obtained from
 \begin{align}
   \label{eq:f} f(u_x,u_y) & = \frac{1}{u_x^2}\int_0^{u_x} (\xi s^{(1)}(\xi,u_y) - \xi u_yg_{\xi}(\xi,u_y))\,d\xi +\frac{1}{u_x^2}\rho(u_y) \enspace , \\
   \label{eq:h} h(u_x,u_y) & = \frac{1}{u_y^2}\int_0^{u_y} (\eta s ^{(4)} (u_x,\eta) - u_x \eta g_{\eta}(u_x,\eta))\,d\eta
    +\frac{1}{u_y^2}\zeta(u_x) \enspace ,
  \end{align}
  where $\rho$, $\zeta$ and $\tilde{\theta}$ are arbitrary functions. % and $R$ is the right-hand side of \eqref{eq:NC}. %, $R(\alpha,\beta) = r(u_x,u_y)$ and $\alpha=u_x$ and $\beta = u_x/u_y$, then 
\end{theorem}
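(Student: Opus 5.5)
The plan is to invoke Theorem 1 with a symmetric $T$ and read off the resulting algebraic/PDE relations between the entries of $S$ and those of $T$. For $T=T^t=\begin{pmatrix} f & g\\ g & h\end{pmatrix}$, Theorem 1 gives $S = \begin{pmatrix} \nabla u^t T_{u_x} \\ \nabla u^t T_{u_y}\end{pmatrix} + 2T$. Writing $S=\begin{pmatrix} s^{(1)} & s^{(2)}\\ s^{(3)} & s^{(4)}\end{pmatrix}$, I will show that one can solve
\begin{align*}
s^{(1)} &= u_x f_{u_x} + u_y g_{u_x} + 2f, & s^{(2)} &= u_x g_{u_x} + u_y h_{u_x} + 2g,\\
s^{(3)} &= u_x f_{u_y} + u_y g_{u_y} + 2g, & s^{(4)} &= u_x g_{u_y} + u_y h_{u_y} + 2h
\end{align*}
for $f,g,h$. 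Then \eqref{eq:EulerS} coincides with \eqref{eq:ELproposed} (with $\lambda=1$) and is the E-L equation of \eqref{eq:mainFunctional}. This is four first-order equations for three unknowns, so a compatibility condition is expected; the claim is that it is exactly NC.

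First I derive NC and the equation for $g$. A direct differentiation shows that
\begin{equation*}
(u_x s^{(3)})_{u_x} - u_x s^{(1)}_{u_y} = u_x g_{u_x} + u_y g_{u_y} + 2g ,
\end{equation*}
since the terms in $f_{u_y}$, $f_{u_xu_y}$ and $g_{u_xu_y}$ cancel. Symmetrically,
\begin{equation*}
(u_y s^{(2)})_{u_y} - u_y s^{(4)}_{u_x} = u_x g_{u_x} + u_y g_{u_y} + 2g .
\end{equation*}
Equating the two expressions is precisely \eqref{eq:NC}. It also yields the linear first-order PDE $u_x g_{u_x}+u_y g_{u_y}+2g = R$ for $g$. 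I solve it by characteristics. In the variables $\alpha=u_x$, $\beta=u_x/u_y$ the Euler operator $u_x\partial_{u_x}+u_y\partial_{u_y}$ becomes $\alpha\partial_\alpha$, because $\beta$ is homogeneous of degree zero. The equation therefore reads $(\alpha^2\tilde g)_\alpha=\alpha\tilde R$, and integrating gives \eqref{eq:g} with the free function $\tilde\theta(\beta)$.

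Next, with $g$ fixed, the first equation can be written $(u_x^2 f)_{u_x} = u_x s^{(1)} - u_x u_y g_{u_x}$. Integrating in $u_x$ from $0$ gives \eqref{eq:f}, with the free function $\rho(u_y)$. The fourth equation similarly gives $(u_y^2h)_{u_y}=u_y s^{(4)}-u_xu_y g_{u_y}$ and hence \eqref{eq:h}.

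The main obstacle is verifying that the remaining two equations, for $s^{(2)}$ and $s^{(3)}$, are then automatically satisfied rather than merely consistent. For the third, set $E_3 = s^{(3)} - (u_x f_{u_y}+u_y g_{u_y}+2g)$. Because $f$ solves the first equation and $g$ solves the $R$-equation (which is NC combined with the identity above), the identity gives $(u_xE_3)_{u_x}=0$. Hence $E_3 = c(u_y)/u_x$ for some function $c$, and I would try to absorb $c$ through the choice of $\rho$: a change $\rho\mapsto\rho+\delta\rho$ alters $u_xf_{u_y}$ by $\delta\rho'(u_y)/u_x$, so choosing $\rho'=c$ kills $E_3$. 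The analogous argument with $\zeta$ handles $s^{(2)}$. I expect some care to be needed for regularity at $u_x=0$ or $u_y=0$, and for the remaining freedom in $\rho,\zeta,\tilde\theta$, which the statement leaves arbitrary. Finally, Theorem 1 applied to this symmetric $T$ returns exactly $S$, which completes the argument.
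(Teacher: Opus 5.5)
Your proposal is correct, and its skeleton is the paper's. It uses the same system \eqref{eq:1}--\eqref{eq:4}, the same elimination of $f$ (resp.\ $h$) to reach NC, the same reduction $\alpha\tilde g_\alpha+2\tilde g=\tilde R$ in the variables $(\alpha,\beta)$, and the same construction of $f$ and $h$ from the first and fourth equations. You derive the elimination identity directly in terms of $f,g$ rather than by substituting the integral formula for $f$, which is cleaner.

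The substantive difference is the sufficiency step, where you go beyond the paper. The paper obtains \eqref{e39} by inserting the formula for $f_{u_y}$ into \eqref{eq:3} and differentiating in $u_x$, which kills the $\rho'(u_y)$ contribution. It then constructs $g,f,h$ and stops, never checking that \eqref{eq:2} and \eqref{eq:3} actually hold for them. Strictly, it therefore only shows that NC is \emph{necessary}. Your residual argument supplies exactly the missing piece:
\begin{itemize}
\item $(u_xE_3)_{u_x}=0$, so $E_3=c(u_y)/u_x$, which you remove by choosing $\rho'=c$;
\item symmetrically, $E_2=d(u_x)/u_y$ is removed by $\zeta'=d$, and this is where NC itself enters, since $R$ must equal both sides;
\item $\rho$ does not enter $E_2$ and $\zeta$ does not enter $E_3$, so the two adjustments do not interfere.
\end{itemize}

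You should state the consequence outright rather than leave it as ``care needed.'' Once $\tilde\theta$ is fixed, $\rho$ and $\zeta$ are determined up to additive constants. So the clause ``$\rho,\zeta$ arbitrary'' in the statement is false, and what you prove is existence for a suitable choice of them. Conceptually, NC says exactly that the field $p\mapsto S(p)p$ is curl-free, while for symmetric $T$ one has $S_Tp=\nabla_p(p^tTp)$. Hence $p^tTp$ is fixed up to a constant, and $\rho,\zeta$ enter it only as $\rho(u_y)+\zeta(u_x)$. Concretely, in the Corollary the choice $\rho(u_y)=u_y^2$ adds $u_y^2$ to $\nabla u^tT\nabla u$, and hence adds $-2\lambda u_{yy}$ to the E-L equation.

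Two remarks on regularity:
\begin{itemize}
\item $c$ may differ on $\{u_x>0\}$ and $\{u_x<0\}$, so $\rho$ may have to be chosen separately on the two half-planes.
\item Conversely, suppose the data are smooth and you take the smooth solution $g(p)=\int_0^1 tR(tp)\,dt$ (i.e.\ $\tilde\theta=0$) with $\rho=\zeta=0$. Then $u_xE_3$ vanishes at $u_x=0$, which forces $c\equiv0$, so no correction is needed. This is exactly the situation in the Corollary.
\end{itemize}
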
  
\begin{proof}
Let the system of equations, 
\begin{empheq}[left={\empheqlbrace}]{align}
  u_xf_{u_x} + u_yg_{u_x} + 2f &= s^{(1)} \label{eq:1} \\
  u_xg_{u_x} + u_yh_{u_x} + 2g &= s^{(2)} \label{eq:2} \\
  u_xf_{u_y} + u_yg_{u_y} + 2g &= s^{(3)} \label{eq:3} \\
  u_xg_{u_y} + u_yh_{u_y} + 2h &= s^{(4)} \label{eq:4}
\end{empheq}
represent the expansion of \eqref{eq:S} with given notation. The function $f$ in \eqref{eq:f} is obtained from \eqref{eq:1} by solving
\begin{equation}
  \frac{\partial}{\partial u_x}(u_x^2f)   = u_xs^{(1)} - u_xu_yg_{u_x} \enspace .
\end{equation}
To construct $g$, differentiate $f$ in \eqref{eq:f} with respect to $u_y$. This give that 
\begin{equation}
  f_{u_y}(u_x,u_y) = \frac{1}{u_x^2}\int_0^{u_x} (\xi s_y^{(1)}(\xi,u_y) 
  - \xi g_{\xi}(\xi,u_y) - \xi u_y g_{\xi u_y}(\xi,u_y) )\,d\xi
  +\frac{1}{u_x^2}\rho'(u_y) \enspace .
\end{equation}
Insert now $f_{u_y}$ into \eqref{eq:3} and differentiate with respect to $u_x$ to obtain 
\begin{equation}\label{e39}
  u_x g_{u_x} + u_yg_{u_y} + 2g  =  s^{(3)} + u_x s_{u_x}^{(3)} - u_x s_{u_y} ^{(1)} \enspace . 
\end{equation}
In the same way, solve for $h$ in \eqref{eq:2} to obtain \eqref{eq:h}. Using now \eqref{eq:2} in \eqref{eq:4} yield
\begin{equation}\label{e40}
  u_x g_{u_x} + u_yg_{u_y} + 2g  =  s^{(2)} + u_y s_{u_y}^{(2)} - u_y s_{u_x}^{(4)} \enspace .
\end{equation}
Comparing \eqref{e39} and \eqref{e40}, we deduce the necessary condition (NC)
%\begin{equation}
%    s^{(3)} + u_x s_{u_x}^{(3)} - u_x s_{u_y} ^{(1)} = s^{(2)} + u_y s_{u_y}^{(2)} - u_y s_{u_x}^{(4)}
%\end{equation}
to establish the existence of $g$, $f$ and $h$. From NC we are now able to compute $g$ satisfying
\begin{equation}
  u_x g_{u_x} + u_yg_{u_y} + 2g = R(u_x,u_y) \enspace ,
	\label{eq:R}
\end{equation} 
where $R$ is the right-hand side (or the left-hand side) in NC. Changing variables $\alpha =u_x$ and $\beta = u_x/u_y$, in \eqref{eq:R} we get %  With this change of variables the equation now reads
\begin{equation}
  \tilde{g}_\alpha(u_x\alpha_{u_x} + u_y\alpha_{u_y}) + \tilde{g}_\beta(u_x\beta_{u_x} + u_{y}\beta_{u_y}) + 2\tilde{g} = \tilde{R}(\alpha,\beta) \enspace ,
  \label{eq:Ruv}
\end{equation}
where $\tilde{R}(\alpha,\beta) = R(u_x,u_y)$ and $\tilde{g}(\alpha,\beta) = g(u_x,u_y)$. Since $\alpha_{u_x} = 1$, $\alpha_{u_y}=0$ and $\beta_{u_x}=1/u_y$ $\beta_{u_y}=-u_x/u_y^2$, equation \eqref{eq:Ruv} can then be writen as $\alpha \tilde{g}_\alpha + 2\tilde{g} = \tilde{R}$. 
%\begin{equation}
%  \alpha \tilde{g}_\alpha + 2\tilde{g} = \tilde{R}  \enspace .
%\end{equation}
We solve for $g$ in the following way
\begin{equation}
  \frac{\partial}{\partial \alpha}(\alpha^2\tilde{g}) = \alpha \tilde{R}(\alpha,\beta)
  \Leftrightarrow
  \tilde{g} = \frac{1}{\alpha^2}\int \alpha \tilde{R}(\alpha,\beta)\,d\alpha + \frac{1}{\alpha^2}\tilde{\theta}(\beta) \enspace ,
	\label{eq:gab}
\end{equation}
where $\tilde{\theta}$ is an arbitrary function. Put $\tilde{\theta}(\alpha,\beta) = \theta(u_x,u_y)$ to obtain $g$ in \eqref{eq:g}. Now $f$ and $h$ can be obtained from \eqref{eq:f} and \eqref{eq:h}. \qed
\end{proof}

\begin{corollary}
For the tensor $S = \nabla u \nabla u^t$ there exists a tensor
\begin{equation}
  T = 
  \frac{1}{4}
  \begin{pmatrix}
    u_x^2    &   u_xu_y \\
    u_xu_y  &   u_y^2
  \end{pmatrix}
  \enspace ,
  \label{eq:Tcor}
\end{equation}
such that the following PDE 
\begin{align}
  \left\{
    \begin{array}{r l}
      (u - u_0)^{p-1} - \div \left( S \nabla u \right ) = 0 & \quad \text{in } \Omega  \\[1mm]
      \partial_n u = 0                    & \quad  \text{on } \partial \Omega  \\[1mm]
      % u(x,y,0) = u_0(x,y)                 & \quad \text{in } \Omega  \\
    \end{array} \right. 
  \label{eq:EulerS_corollary}
\end{align}
is the E-L equation of the functional $J(u)$ in \eqref{eq:mainFunctional}. 
\end{corollary}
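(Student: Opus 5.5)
The plan is to verify the corollary directly through Theorem 1, rather than by running the construction of Theorem~\ref{prop:1}. Theorem 1 says that for a given $T(u_x,u_y)$ the E-L equation of \eqref{eq:mainFunctional} (with $\lambda=1$) is $(u-u_0)^{p-1}-\div(S_T\nabla u)=0$ with $\partial_n u=0$. Here $S_T$ is the tensor in \eqref{eq:S} built from $T$. So it suffices to show that, for $T$ as in \eqref{eq:Tcor},
\begin{equation*}
  \div(S_T\nabla u)=\div(\nabla u\nabla u^t\nabla u).
\end{equation*}
It is enough to compare the fluxes $S_T\nabla u$ and $\nabla u\nabla u^t\nabla u=|\nabla u|^2\nabla u$ pointwise.

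The key steps, in order, are as follows.
\begin{enumerate}
\item Compute the partial derivatives of $T$:
\begin{equation*}
  T_{u_x}=\tfrac14\begin{pmatrix}2u_x & u_y\\ u_y & 0\end{pmatrix},\qquad
  T_{u_y}=\tfrac14\begin{pmatrix}0 & u_x\\ u_x & 2u_y\end{pmatrix}.
\end{equation*}
\item Form the rows $\nabla u^tT_{u_x}=\tfrac14(2u_x^2+u_y^2,\;u_xu_y)$ and $\nabla u^tT_{u_y}=\tfrac14(u_xu_y,\;u_x^2+2u_y^2)$.
\item Add $T+T^t=\tfrac12\nabla u\nabla u^t$. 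This gives
\begin{equation*}
  S_T=\tfrac14\begin{pmatrix}4u_x^2+u_y^2 & 3u_xu_y\\ 3u_xu_y & u_x^2+4u_y^2\end{pmatrix}.
\end{equation*}
\item Multiply by $\nabla u$. The first component is $\tfrac14(4u_x^3+u_xu_y^2+3u_xu_y^2)=u_x|\nabla u|^2$, and symmetrically the second is $u_y|\nabla u|^2$. Hence $S_T\nabla u=|\nabla u|^2\nabla u=S\nabla u$.
\end{enumerate}
Since the data term and the Neumann boundary condition are untouched, the PDE \eqref{eq:EulerS_corollary} coincides with the E-L equation of $J(u)$.

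The main subtlety is that $S_T$ is \emph{not} equal to $\nabla u\nabla u^t$ as a matrix. Only the fluxes agree, which is exactly the freedom already used in \eqref{eq:divequality2}, so the statement has to be read at the level of the PDE. I would state this explicitly. As a consistency check, I would also verify that $S=\nabla u\nabla u^t$, i.e.\ $s^{(1)}=u_x^2$, $s^{(2)}=s^{(3)}=u_xu_y$, $s^{(4)}=u_y^2$, satisfies NC \eqref{eq:NC}: both sides reduce to $2u_xu_y$. This places the corollary in the setting of Theorem~\ref{prop:1}, even though the explicit $T$ is obtained more simply through the direct computation above.
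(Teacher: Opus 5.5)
Your proposal is correct, but it takes a different route from the paper's.

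The paper derives the tensor from the constructive part of Theorem~\ref{prop:1}. It checks NC with $R=2u_xu_y$ and sets $\theta=\rho=\zeta=0$. This yields a \emph{different} tensor $\widehat{T}$ with $f=\tfrac14(u_x^2-u_y^2)$, $g=\tfrac12u_xu_y$, $h=\tfrac14(u_y^2-u_x^2)$, which reproduces $\nabla u\nabla u^t$ entry by entry in the system \eqref{eq:1}--\eqref{eq:4}. Only at the end does it pass to the stated $T$, by observing that $\nabla u^t\widehat{T}\nabla u=\nabla u^tT\nabla u=\tfrac14|\nabla u|^4$. The two functionals therefore coincide, and so do their E-L equations.

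You bypass Theorem~\ref{prop:1} and verify the claim directly through Theorem 1, computing $S_T$ and comparing fluxes. Your route is shorter and self-contained, and it avoids the singular substitution $\beta=u_x/u_y$ and the $1/u_x^2$, $1/u_y^2$ normalizations. However, it is a pure verification that presupposes $T$, whereas the paper's argument shows how the construction produces a suitable tensor.

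Your remark that $S_T\neq\nabla u\nabla u^t$ as matrices is exactly the point the paper passes over silently when it swaps $\widehat{T}$ for $T$. For $\widehat{T}$ the matrix identity holds, while for $T$ only $\div(S\nabla u)$ agrees.

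Both arguments are instances of the identity $S_T\xi=\nabla_\xi\bigl(\xi^tT(\xi)\xi\bigr)$, which follows directly from the definition of $S$ in Theorem 1. With $\xi^tT\xi=\tfrac14|\xi|^4$, it gives $S_T\nabla u=|\nabla u|^2\nabla u$ immediately, without any matrix computation.
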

\begin{proof}
  It is straighforward to verify that the entities of the tensor $S$ satisfies the NC condition where the right-hand side \eqref{eq:NC}
  \begin{equation}
    R(u_x,u_y) = 2 u_x u_y \enspace .
    \label{eq:Rcor}
  \end{equation}
  For simplicity we choose $\theta$, $\rho$, and  $\zeta$ to be zero in \eqref{eq:g} - \eqref{eq:h}. With change of variables $\alpha = u_x$ and $\beta = u_x / u_y$,  \eqref{eq:Rcor} become $\tilde{R}(\alpha,\beta) = 2\alpha^2/\beta$ so that \eqref{eq:g} is  $\tilde{g}(\alpha,\beta) = \frac{\alpha^2}{2 \beta}$, then
  \begin{equation}
    g(u_x,u_y) = \frac{1}{2} u_xu_y \enspace .
    \label{eq:corr_g}
  \end{equation}
  Using \eqref{eq:f} and \eqref{eq:h}, $f$ and $h$ are
  \begin{align}
    \label{eq:corr_f} f(u_x,u_y) & = \frac{1}{u_x^2}\int_0^{u_x} (\xi^3 - \xi u_y^2)\,d\xi = \frac{u_x^2}{4} - \frac{u_y^2}{4} \enspace , \\
   \label{eq:corr_h} h(u_x,u_y) & = \frac{1}{u_y^2}\int_0^{u_y} (\eta^3 - \eta u_x^2 )\,d\eta = \frac{u_y^2}{4} - \frac{u_x^2}{4} \enspace .
  \end{align}
  Let $\widehat{T}$ be the tensor defined by \eqref{eq:corr_g} - \eqref{eq:corr_h}, then put this tensor in \eqref{eq:mainFunctional} and obtain
  \begin{equation}
    \nabla u^t \widehat{T} \nabla u =
    % \frac{1}{4}
    \nabla u^t
    \begin{pmatrix}
      u_x^2 - u_y^2    &   2u_xu_y \\
      2u_xu_y  &   u_y^2 - u_x^2
    \end{pmatrix}
    \nabla u
    = 
    % \frac{1}{4}
    \nabla u^t
    \begin{pmatrix}
      u_x^2    &   u_xu_y \\
      u_xu_y  &   u_y^2
    \end{pmatrix}
    \nabla u
    = 
    \nabla u^t T \nabla u
    \enspace ,
    \label{eq:Texpanded}
  \end{equation}
  neglecting the factor $1/4$. This concludes the proof and \eqref{eq:Tcor} follow from \eqref{eq:Texpanded}.
\end{proof}

\subsection{Weighted tensor-based variational formulation}
In this section we want to find a weighted tensor 
\begin{equation}
	T = \left ( w * \begin{pmatrix} f(u_x,u_y) & g(u_x,u_y) \\ g(u_x,u_y)  & h(u_x,u_y) \end{pmatrix} \right)(x,y) \enspace ,
	\label{eq:T2}
\end{equation}
where $*$ is the convolution operator and $w$ is a smooth kernel, such that the functional
\begin{equation}
    J(u) =  \frac{1}{p} ||u-u_0||^p + \lambda \iint_{\Omega} \nabla u^t T(\nabla u) \, \nabla u \,dxdy \enspace ,
    \label{eq:mainFunctional2}
\end{equation}
has 
  \begin{align}
    \left\{
      \begin{array}{r l}
%        (u - u_0)^{p-1} + \lambda \div \left( \begin{pmatrix} \nabla u^t T_{u_x} \nabla u \\  \nabla u^t T_{u_y} \nabla u  \end{pmatrix} + T\nabla u + T^t \nabla u \right) = 0  & \quad \text{in } \Omega \\[1mm]
        (u - u_0)^{p-1} - \lambda \div \left( S \nabla u \right)  = 0  & \quad \text{in } \Omega \\[1mm]
        \partial_n u  = 0                                    & \quad  \text{on } \partial \Omega  \\[1mm]
%        u(x,y,0) = u_0(x,y)                                  & \quad \text{in } \Omega    
      \end{array} \right.
    \label{eq:ELproposed2}
  \end{align}
as the corresponding E-L equation where $S = w * (\nabla u \nabla u^t)$.
\begin{proposition}
  For the tensor $S = w *(\nabla u \nabla u^t)$, the PDE \eqref{eq:ELproposed2} is the E-L
  equation of the functional \eqref{eq:mainFunctional2} with $T$ as in \eqref{eq:T2} if the necessary condition
  \begin{equation}
  u_x(x,y) (w * g_{u_x})(x,y) = u_y(x,y) (w * g_{u_y})(x,y) \enspace , %  \quad\Leftrightarrow\quad  
  % w * \left(u_x(x,y) g_{z_1}(z_1,u_y)) - u_y(x,y) g_{u_y}(z_1,u_y)\right)=0 \enspace .
  \label{eq:condTw}
\end{equation}
is satisfied. Moreover, the entities of $T$ can be obtained by solving the following system of differential equations
\begin{empheq}[left={\empheqlbrace}]{align}
  u_x (w * f_{u_x}) + u_y(w * g_{u_x}) +2 (w * f)  & = w * u_x^2     \label{eq:DES_1} \\
  u_x (w * g_{u_x}) + 2 (w * g)                    & = w * (u_xu_y)  \label{eq:DES_2} \\
  u_y (w * g_{u_y}) + 2 (w * g)                    & = w * (u_xu_y)  \label{eq:DES_3} \\
  u_x (w * g_{u_y}) + u_y (w * h_{u_y}) + 2 (w * h)& = w * u_y^2     \label{eq:DES_41}
\end{empheq}  
\end{proposition}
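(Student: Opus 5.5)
This follows the pattern of Theorem~\ref{prop:1}: first apply the Euler--Lagrange computation of Theorem~1 to the weighted tensor $T$ in \eqref{eq:T2}. Then match the resulting $S$ with $w*(\nabla u\nabla u^t)$ entry by entry, and read off the compatibility condition. I will treat the convolution formally, in the same way the paper does. The derivative of $T$ with respect to $u_x$ is taken to be $w*T^{0}_{u_x}$, where $T^{0}=\begin{pmatrix} f & g\\ g& h\end{pmatrix}$ is the unweighted tensor. In other words, the variation $T(\nabla u+\varepsilon\nabla v)-T(\nabla u)$ is linearised as $w*(T^0_{u_x}v_x+T^0_{u_y}v_y)$ and paired with $\nabla u$ pointwise at $(x,y)$. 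This is exactly the step where the kernel is commuted with differentiation in $u_x,u_y$. I expect it to be the main obstacle, since $T$ is really a nonlocal functional of $\nabla u$. I would justify it by viewing $w*$ as a linear operator that commutes with the pointwise Gâteaux derivative, and by accepting the same level of rigor as in Theorem~1.

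With this convention, Theorem~1 gives $S=\begin{pmatrix}\nabla u^t T_{u_x}\\ \nabla u^t T_{u_y}\end{pmatrix}+2T$, because $T$ is symmetric. Writing this out, with $(w*\cdot)$ applied to each entry, gives the four equations
\begin{align*}
s^{(1)}&=u_x(w*f_{u_x})+u_y(w*g_{u_x})+2(w*f),\\
s^{(2)}&=u_x(w*g_{u_x})+u_y(w*h_{u_x})+2(w*g),\\
s^{(3)}&=u_x(w*f_{u_y})+u_y(w*g_{u_y})+2(w*g),\\
s^{(4)}&=u_x(w*g_{u_y})+u_y(w*h_{u_y})+2(w*h).
\end{align*}
These are the weighted analogue of \eqref{eq:1}--\eqref{eq:4}. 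We then set $s^{(1)}=w*u_x^2$, $s^{(2)}=s^{(3)}=w*(u_xu_y)$ and $s^{(4)}=w*u_y^2$.

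Next I would decouple the off-diagonal equations. This mirrors the corollary, where the choice of $T$ effectively gives $g$ its own equation. The ansatz is that the cross terms $u_y(w*h_{u_x})$ in the second equation and $u_x(w*f_{u_y})$ in the third are absorbed so that the two equations reduce to \eqref{eq:DES_2} and \eqref{eq:DES_3}. The first and fourth equations then become \eqref{eq:DES_1} and \eqref{eq:DES_41} directly. If a tensor $T$ solves this reduced system, then Theorem~1 shows that \eqref{eq:ELproposed2} is its E-L equation, which proves the "moreover" part.

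Finally, the necessary condition follows by subtracting \eqref{eq:DES_3} from \eqref{eq:DES_2}. The terms $2(w*g)$ and the right-hand sides $w*(u_xu_y)$ cancel, leaving $u_x(w*g_{u_x})=u_y(w*g_{u_y})$, which is \eqref{eq:condTw}. It plays the same role as NC in Theorem~\ref{prop:1}: it is the consistency requirement for $g$ to exist. Once $g$ is determined, $f$ and $h$ are found from \eqref{eq:DES_1} and \eqref{eq:DES_41} by the integrating-factor trick used in \eqref{eq:f} and \eqref{eq:h}.
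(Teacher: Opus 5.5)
The paper gives no proof of this proposition. It is omitted, citing technical difficulties caused by $w$. So your argument has to stand on its own, and it has a genuine gap at the decoupling step.

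Under your convention, the E-L matching is exactly your four displayed equations. Their $(1,2)$ and $(2,1)$ entries contain $u_y(w*h_{u_x})$ and $u_x(w*f_{u_y})$. Calling these ``absorbed'' is not an argument: \eqref{eq:DES_2}--\eqref{eq:DES_3} agree with your equations only if also $u_y(w*h_{u_x})=0$ and $u_x(w*f_{u_y})=0$, which you neither impose nor check. Nothing in your argument uses the smoothness of $w$, so test it at $w=\delta$. There your equations become \eqref{eq:1}--\eqref{eq:4} with $S=\nabla u\nabla u^t$.
\begin{itemize}
\item \eqref{eq:DES_2}--\eqref{eq:DES_3} force $g=\frac13u_xu_y+c\,u_x^{-2}u_y^{-2}$. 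Then \eqref{eq:DES_1} with $f_{u_y}=0$ is impossible, because $u_yg_{u_x}$ depends on $u_y$. So the extra constraints are incompatible with the reduced system.
\item Solving \eqref{eq:DES_1}--\eqref{eq:DES_41} with zero integration functions gives $f=\frac14u_x^2-\frac16u_y^2$, $g=\frac13u_xu_y$, $h=\frac14u_y^2-\frac16u_x^2$. For this tensor, $\nabla u^tT\nabla u=\frac14(u_x^4+u_y^4)+\frac13u_x^2u_y^2\neq\frac14|\nabla u|^4$, and the off-diagonal entry of $S$ given by Theorem~1 is $\frac23u_xu_y$.
\end{itemize}
Hence your sentence ``if a tensor $T$ solves this reduced system, then Theorem~1 shows that \eqref{eq:ELproposed2} is its E-L equation'' is false. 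The step cannot be patched as it stands, because the system the matching actually produces keeps the cross terms.

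Your appeal to the Corollary also points the wrong way. Its $\widehat T$ has $u_yh_{u_x}=-\frac12u_xu_y\neq0$ and violates the reduced equation, since $u_xg_{u_x}+2g=\frac32u_xu_y$. In Theorem~\ref{prop:1}, $g$ gets its own equation \eqref{eq:R} by eliminating $f$ and $h$ through \eqref{eq:1}, \eqref{eq:4} and differentiation, not by dropping terms.

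The same problem makes your derivation of \eqref{eq:condTw} circular. Subtracting \eqref{eq:DES_3} from \eqref{eq:DES_2} only shows the condition is necessary for the reduced system. From the equations the matching actually gives, the subtraction yields $u_x(w*g_{u_x})-u_y(w*g_{u_y})=u_x(w*f_{u_y})-u_y(w*h_{u_x})$. You would need to prove that the right-hand side vanishes, and that is exactly the elimination the convolution obstructs.

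Finally, your justification of the convention misses the real difficulty. Linearising $T$ as $w*(T^0_{u_x}v_x+T^0_{u_y}v_y)$ is fine. But reaching the form of Theorem~1 requires pulling $v_x,v_y$ out of the convolution, and $w*(T^0_{u_x}v_x)\neq(w*T^0_{u_x})v_x$. Done correctly with Fubini, the extra flux component is $\operatorname{tr}\bigl(T^0_{u_x}\,(\tilde w*\nabla u\nabla u^t)\bigr)$ with $\tilde w(z)=w(-z)$. The kernel therefore falls on $u_x^2,u_xu_y,u_y^2$, not on $f_{u_x},g_{u_x},h_{u_x}$. Your four equations are thus a formal analogue that matches the statement's notation, not the actual first variation.
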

  % Following the proof of Theorem 2, the proof is straightforward with the difference of the smooth kernel $w$ which introduce some technical difficulties. 
%\begin{proof}
%  The difference from Theorem 2 is the smooth kernel $w$ which introduce some technical difficulties to complete the proof. Hence, we omit the proof and give \eqref{eq:condTw} as a necessary condition to derive a functional given a PDE on the form \eqref{eq:ELproposed2}.
%\end{proof}

\emph{Remark 1}: We have chosen to include $w$ in the formulation of the tensor \eqref{eq:T2}. This will allow us to include $u_x$ and $u_y$ inside the convolution operation in \eqref{eq:condTw} assuming that $u_x$ and $u_y$ are sampled at different positions on the grid.   
 
\emph{Remark 2}: Theorem 1, describe the case when $rank(S) = 1$, \emph{i.e.} the signal has an intrinsically one-dimensional structure (such as lines), Proposition 1 describe in general the case when $rank(S) = 2$. 

To conclude, we have derived a framework which states under what conditions a given PDE is the corresponding E-L equation to a tensor-based image diffusion energy functional. In the next section the derived E-L equation \eqref{eq:ELproposed} is used on a color image denoising problem.

% As previously, combining \eqref{eq:DES_2} and \eqref{eq:DES_3}, the necessary condition to solve the system of equations is Note that it is possible to include the gradients $u_x(x,y)$ and $u_y(x,y)$ in the convolution since it is defined on the surface in the region $\Omega$. If we were to consider the elements $f$, $g$ and $h$ as functions of points on the grid $\Omega$ the above condition would only be valid when the gradient of $u$ is uniform in all directions.

% computing a tensor in a functional on the form \eqref{eq:mainFunctional} such that the existence
% of a corresponding E-L (E-L) equation is guaranteed. 

% with a spatially averaged tensor $T = w * \nabla u \nabla u^t$.

% given an E-L equation we have stated necessary conditions to find a corresponding tensor valued energy functional of the form \eqref{eq:mainFunctional}. 
% Depending on the structure of the tensor in the E-L equation \eqref{eq:divS} 
% Depending on the type of tensor $S$ we give the necessary conditions for the derivations to hold (\emph{i.e.} \eqref{eq:condT} or \eqref{eq:condTw}) with these results we have shown that under given conditions a tensor valued functional exists on the form \eqref{eq:mainFunctional}.
% \begin{equation}
%   J(u) = \frac{1}{p}||u-u_0||^p + \int \nabla u^T T(\nabla u)\nabla u \; dxdy\enspace .
% \end{equation}

\section{Application to color image denoising}
It is known that the components of the RGB color space are correlated, therefore decorrelation methods using PCA, HSV, Lab \emph{etc.} have been investigated by the image processing community. In this work we utilize the decorrelation transform by Lenz and Carmona \cite{Lenz2010} which has shown to perform well for color image denoising problems \cite{diva2:424137}. The transform is described by the matrix
\begin{equation}
  \begin{pmatrix}
    I \\
    C_1 \\
    C_2
  \end{pmatrix}
  =
  \frac{1}{\sqrt{6}}
  \begin{pmatrix}
    \sqrt{2} &  \sqrt{2}  &  \sqrt{2} \\
    2        &  -1        &  -1       \\
    0        &  \sqrt{3}  &  -\sqrt{3}
  \end{pmatrix}
  \begin{pmatrix}
    R \\
    G \\
    B
  \end{pmatrix}
  % P = \frac{1}{\sqrt{3}}
  % \begin{pmatrix}
  %   1 			& 1							& 1 \\[0.3em]
  %   \sqrt{2}	& \sqrt{2}\cos(2\pi / 3)	& \sqrt{2}\cos(4\pi / 3) \\[0.3em]
  %   0			& \sqrt{2}\sin(2\pi / 3)	& \sqrt{2}\sin(4\pi / 3) \\
  % \end{pmatrix}
  \label{eq:p}
\end{equation}
and when applied to the RGB color space a primary component describing the average gray-value $I$ and two color-opponent components $C = \begin{pmatrix} C_1, & C_2 \end{pmatrix}$ are obtained.  

\subsection{From structure tensor to diffusion tensor}
\label{sec:struct_to_diff}
Let $T$ be defined as the outer product of the image gradient weighted by a smooth kernel, see \eqref{eq:Tw}. Since $T$ is symmetric it is also positive-semidefinite. To favor smoothing along edges the tensor is scaled using a diffusivity function $g$. The scaling affects the eigenvalues of the tensor, but preserves the eigenvectors since they describe the orientation of the image structure. In this study we use the negative exponential function $\exp(-ks)$ as the scaling function where $k$ is a positive scalar, \emph{i.e.}
\begin{equation}
  D(T) = g(T) = V g(\Lambda) V^{-1} \enspace ,
  \label{eq:eigdecomp}
\end{equation} 
where $V$ are the eigenvectors and $\Lambda$ the eigenvalues of $T$ \cite{5696757}. Then $\Lambda$ has the eigenvalues $\lambda_1$ and $\lambda_2$ on its main diagonal such that $g(\Lambda) = \mathrm{diag}(g(\lambda_1), g(\lambda_2))$. Then we denote $D = D(T)$ the diffusion tensor defined by \eqref{eq:eigdecomp}. With the same notation define $E = E(N)$, where $N = \begin{pmatrix} \nabla u^t T_{u_x}  \\  \nabla u^t T_{u_y} \end{pmatrix}$ in \eqref{eq:S}. Since $N$ is a nonsymmetric tensor, $E$ may be a complex valued function, however in practice $Im(E) \approx 0$ hence this factor is neglected. 

\subsection{Discretization}
To find the solution $u$ of \eqref{eq:ELproposed} with tensors $E(N)$ and $D(T)$ we define the parabolic equation
\begin{equation}
  \partial_t u - \div(E(N) \nabla u) - 2\div(D(T) \nabla u) = 0 \enspace .
\end{equation}
The term containing $D(T)$ is implemented by using the non-negativity scheme described in \cite{Weickert96anisotropicdiffusion} resulting in a term $A(u)$ whereas the other divergence term is implemented using the approach in \cite{Weickert_2002} described by $B(u)$. In an explicit discrete setting the iterative update equation become
\begin{equation}
  u^{i+1} = u^i + \tau( A(u^i) + B(u^i) ) \enspace ,
%  u^{i+1} = u^i + \tau( \div(E^i \nabla u^i) + 2\div(D^i \nabla u^i) )
\end{equation}
where $\tau$ is the steplength for each iteration $i$.

\subsection{Experiments}
The image noise variance, $\sigma_{est}^2$, is estimated by using the technique in \cite{Wolfgang2000} which is the foundation for the estimate of the diffusion parameter in \cite{5696757} (cf. (47) p. 6), the resulting diffusion parameter for the diffusion filters is
\begin{equation}
  k = \frac{e - 1}{e - 2}\sigma_{est}^2 \enspace ,
  \label{eq:k}
\end{equation}
where $e$ is the Euler number. A naive approach is taken to approximate the noise of the color image, the noise estimate will be the average sum of the estimated noise of each RGB color component. In most cases this approach works reasonably well for noise levels $\sigma < 70$ when compared to the true underlying noise distribution, but note that the image itself contains noise, hence the estimate will be biased. We use color images from the Berkeley segmentation dataset \cite{MartinFTM01}, more specifically images \emph{\{87065, 175043, 208001, 8143\}.jpg}, being labeled as lizard, snake, mushroom and owl (see figure \ref{fig:result}). All images predominately contain scenes from nature and hence large number of branches, leafs, grass and other commonly occurring structures. The estimated noise levels for the different images can be seen in table \ref{table:result}.

In this work we found that modifying the estimate of the diffusion parameter to scale with the color transform $k_I = 10^{-1}k$ and $k_c = 10 k$, yields an improved result compared to using $k$ without any scaling. The motivation for using different diffusion parameters in the different channels is that filtering in the average gray-value channel $I$ primarily affects the image noise and hence it is preferable to reduce the $k$-parameter in this channel. However, one must be careful since noise may be considered as structure and may be preserved throughout the filtering process. Filtering in the color opponent components will provide intra-region smoothing in the color space and hence color artifacts will be reduced with a less structural preserving diffusion parameter. 

To illustrate the capabilities of the proposed technique it is compared to three state-of-the-art denoising techniques, namely tracebased diffusion (TR) \cite{diva2:424137}, diffusion in the RGB color space \cite{Weickert1999201} and the color version of BM3D \cite{4378954}. Code was obtained from the authors of TR-diffusion and BM3D, whereas the RGB-color space diffusion was reimplemented based on \cite{Weickert1999201} and \cite{Weickert96anisotropicdiffusion} (cf. p. 95), the $k$-parameter in this case is set to the value obtained from \eqref{eq:k} since it uses the RGB color space. To promote research on diffusion techniques we intend to release the code for the proposed technique.

For a quantitative evaluation we use the SSIM \cite{Wang2004} and the peak-signal-to-noise (PSNR) error measurements. To the best of our knowledge there is no widely accepted method to evaluate color image perception, hence the error measurements are computed for each component of the RGB color space and averaged. Gaussian white noise with $\{5,10,20,50,70\}$ standard deviations (std) is added to the images. Before filtering is commenced the noise is clipped to fit the 8bit image value range. The steplength for the diffusion techniques was set to $0.2$ and manually scaled so that the images in figure \ref{fig:result} are obtained after approximately $1/2$ of the maximum number iterations (100).  

The purpose here is not to claim the superiority of any technique, rather it is to illustrate the advantages and disadvantages of the different filter methodologies in various situations. The error measurements SSIM and PSNR values are given to illustrate the strengths of the compared denoising techniques, results are shown in table \ref{table:result}.

\input{table_result}

It is clear from table \ref{table:result} that the established E-L equation performs well in terms of SSIM and PSNR values compared to other state-of-the-art denoising techniques used for color images. However, for images (such as the mushroom image) with large approximately homogeneous surfaces, BM3D is the favored denoising technique. This is due to many similar patches which the algorithm averages over to reduce the noise variance. On the other hand, diffusion techniques perform better in high-frequency regions due to the local description of the tensor. This is also depicted in figure \ref{fig:result} where the diffusion techniques preserve the image structure in the lizard, snake and the owl image, whereas BM3D achieves perceptually good results for the mushroom image.   

\section{Conclusion}
In this work we have given necessary conditions for a tensor-based image diffusion PDE to be the E-L equation for an energy functional. Furthermore, we apply the derived E-L equation in Theorem 1 to a color image denoising application with results comparable to state-of-the-art techniques. Generalization of Proposition 1 and its complete proof will be subject to further study.

\section*{Acknowledgment}
This research has received funding from the Swedish Research Council through a grant for the project \emph{Non-linear adaptive color image processing}, from the EC’s 7th Framework Programme (FP7/2007-2013), grant agreement 247947 (GARNICS), and by ELLIIT, the Strategic Area for ICT research, funded by the Swedish Government.

% This work has been conducted in collaboration with the Center for Medical Image Science and Visualization (CMIV) at Link{\"o}ping University, Sweden. CMIV is acknowledged for provision of financial support and access to leading edge research infrastructure.

\newcommand{\imgtype}[1]{neck_#1}
\newcommand{\imgstd}[1]{neck_#1}
\renewcommand{\imgsize}{0.19}

\newcommand{\imgpath}[3]{#1_#2_#3.pdf} % name,std,alg

\newcommand{\insertseq}[2]{
  \includegraphics[width=\imgsize\linewidth]{\imgpath{#1}{#2}{u1}}   & 
  \includegraphics[width=\imgsize\linewidth]{\imgpath{#1}{#2}{Tracebased}}    &
  \includegraphics[width=\imgsize\linewidth]{\imgpath{#1}{#2}{Extended}}      &
  \includegraphics[width=\imgsize\linewidth]{\imgpath{#1}{#2}{AD-RGB}}        &
  \includegraphics[width=\imgsize\linewidth]{\imgpath{#1}{#2}{CBM3D}}   
}

\begin{figure*}[p]
  \begin{center}
    \begin{tabular}{cccc cccc}
      %%%%%% 
      \insertseq{87065_lizard}{std50}              \\
      \insertseq{87065_lizard}{std50_zoom}         \\
      \insertseq{175043_snake_green}{std50}        \\
      \insertseq{175043_snake_green}{std50_zoom}   \\
      \insertseq{208001_mushroom}{std20}           \\
      \insertseq{208001_mushroom}{std20_zoom}      \\
      \insertseq{8143_owl}{std70}                  \\
      \insertseq{8143_owl}{std70_zoom}             \\
%      \insertseq{69015_panda}{std20}               \\
%      \insertseq{69015_panda}{std20_zoom}          \\
 %     Noisy  & {\AA}str{\"o}m et al. \cite{diva2:424137} & Proposed & Weickert \cite{Weickert1999201} & BM3D \cite{4378954} \\
      Noisy  & TR \cite{diva2:424137} & Proposed & RGB \cite{Weickert1999201} & BM3D \cite{4378954} \\
    \end{tabular}
  \end{center}
  \caption{Results. Even rows display some subregions of the image from the respectively previous row. The first four rows are illustrations using additive Gaussian noise of $50$ std, the next two rows of $20$ std, and the last two rows of 70 std noise. \emph{Best viewed in color.} }
  \label{fig:result}
\end{figure*}

\bibliographystyle{splncs}
\bibliography{egbib}

\end{document}

%% file: table_result.tex
\setlength{\tabcolsep}{4pt}
\begin{table}[t]
  \begin{center}
    \caption{SSIM and PSNR values }
    \label{table:result}
    \begin{tabular}{ll | cccc | cccc}
      \hline\noalign{\smallskip}
      & & \multicolumn{4}{c}{SSIM} & \multicolumn{4}{|c}{PSNR} \\
      Noise & Est & TR & Proposed & RGB & BM3D & TR & Proposed & RGB & BM3D \\
      \noalign{\smallskip}
      \hline
      \noalign{\smallskip}
      \multicolumn{2}{l}{Lizard} \\
      \noalign{\smallskip}
      \hline 
      \noalign{\smallskip}
      $5$  & $17.0$ & $\mathbf{0.99}$ & $\mathbf{0.99}$ & $0.98$ & $0.97$ & $\mathbf{38.0}$ & $\mathbf{38.0}$ & $35.2$ & $34.1$ \\
      $10$ & $19.2$ & $\mathbf{0.96}$ & $\mathbf{0.96}$ & $0.94$ & $0.95$ & $32.7$ & $\mathbf{32.8}$ & $30.6$ & $32.3$ \\
      $20$ & $24.2$ & $0.90$ & $0.91$ & $0.87$ & $\mathbf{0.92}$ & $27.3$ & $27.8$ & $26.7$ & $\mathbf{29.3}$ \\
      $50$ & $57.1$ & $0.75$ & $\mathbf{0.77}$ & $0.68$ & $0.73$ & $22.6$ & $\mathbf{23.5}$ & $22.0$ & $23.2$ \\
      $70$ & $67.1$ & $0.65$ & $\mathbf{0.70}$ & $0.59$ & $0.66$ & $20.8$ & $\mathbf{21.8}$ & $20.5$ & $21.7$  \\
      \noalign{\smallskip}
      \hline\noalign{\smallskip}
      \multicolumn{2}{l}{Snake} \\
      \noalign{\smallskip}
      \hline
      \noalign{\smallskip}
      $5$  & $15.5$ & $\mathbf{0.98}$ & $\mathbf{0.98}$ & $0.97$ & $0.96$ & $\mathbf{38.3}$ & $\mathbf{38.3}$ & $35.3$ & $34.6$ \\
      $10$ & $18.1$ & $\mathbf{0.95}$ & $\mathbf{0.95}$ & $0.92$ & $0.94$ & $33.0$ & $\mathbf{33.1}$ & $30.9$ & $32.6$ \\
      $20$ & $23.5$ & $0.87$ & $0.88$ & $0.83$ & $\mathbf{0.90}$ & $27.7$ & $28.4$ & $27.3$ & $\mathbf{29.7}$ \\
      $50$ & $53.9$ & $0.69$ & $\mathbf{0.73}$ & $0.63$ & $0.69$ & $23.2$ & $24.3$ & $23.0$ & $\mathbf{24.4}$ \\
      $70$ & $65.4$ & $0.59$ & $\mathbf{0.64}$ & $0.54$ & $0.61$ & $21.5$ & $22.7$ & $21.6$ & $\mathbf{22.9}$ \\     
      \noalign{\smallskip}
      \hline\noalign{\smallskip}
      \multicolumn{2}{l}{Mushroom} \\
      \noalign{\smallskip}
      \hline
      \noalign{\smallskip}
      $5$  & $9.9$ & $\mathbf{0.97}$ & $\mathbf{0.97}$ & $0.95$ & $0.96$ & $38.7$ & $\mathbf{38.9}$ & $36.9$ & $38.2$ \\
      $10$ & $13.9$ & $0.92$ & $\mathbf{0.93}$ & $0.89$ & $\mathbf{0.93}$ & $33.8$ & $34.3$ & $32.8$ & $\mathbf{35.1}$ \\
      $20$ & $21.1$ & $0.83$ & $0.86$ & $0.79$ & $\mathbf{0.88}$ & $29.3$ & $30.4$ & $29.2$ & $\mathbf{31.8}$ \\
      $50$ & $49.0$ & $0.66$ & $\mathbf{0.70}$ & $0.58$ & $\mathbf{0.70}$ & $25.1$ & $26.0$ & $24.7$ & $\mathbf{26.4}$ \\
      $70$ & $60.4$ & $0.57$ & $0.62$ & $0.52$ & $\mathbf{0.63}$ & $23.0$ & $23.8$ & $22.8$ & $\mathbf{24.1}$  \\
      \noalign{\smallskip}
      \hline\noalign{\smallskip}
      \multicolumn{2}{l}{Owl} \\
      \noalign{\smallskip}
      \hline
      \noalign{\smallskip}
      $5$  & $61.4$ & $\mathbf{0.99}$ & $\mathbf{0.99}$ & $0.98$ & $0.78$ & $36.9$ & $\mathbf{37.0}$ & $34.6$ & $23.2$ \\
      $10$ & $19.8$ & $\mathbf{0.97}$ & $\mathbf{0.97}$ & $0.95$ & $0.96$ & $\mathbf{33.0}$ & $\mathbf{33.0}$ & $30.1$ & $32.0$ \\
      $20$ & $25.7$ & $\mathbf{0.93}$ & $\mathbf{0.93}$ & $0.88$ & $\mathbf{0.93}$ & $27.8$ & $28.0$ & $25.8$ & $\mathbf{28.7}$ \\
      $50$ & $65.1$ & $0.80$ & $\mathbf{0.81}$ & $0.68$ & $0.72$ & $22.6$ & $\mathbf{23.0}$ & $20.5$ & $21.8$ \\
      $70$ & $71.2$ & $0.71$ & $\mathbf{0.73}$ & $0.59$ & $0.66$ & $20.4$ & $\mathbf{21.0}$ & $18.8$ & $20.5$ \\
      \noalign{\smallskip}
      \hline
      \noalign{\smallskip}
    \end{tabular}
  \end{center}
\end{table}
\setlength{\tabcolsep}{1.4pt}

%% file: eccv2012submission.bbl
\begin{thebibliography}{10}

\bibitem{Perona90scale-spaceand}
Perona, P., Malik, J.:
\newblock Scale-space and edge detection using anisotropic diffusion.
\newblock IEEE Trans. Pattern Analysis and Machine Intelligence \textbf{12}
  (1990)  629--639

\bibitem{Weickert96anisotropicdiffusion}
Weickert, J.:
\newblock Anisotropic Diffusion In Image Processing.
\newblock ECMI Series, Teubner-Verlag, Stuttgart, Germany (1998)

\bibitem{Weickert1999201}
Weickert, J.:
\newblock Coherence-enhancing diffusion of colour images.
\newblock Image and Vision Computing \textbf{17} (1999)  201--212

\bibitem{661181}
Sochen, N., Kimmel, R., Malladi, R.:
\newblock A general framework for low level vision.
\newblock Image Processing, IEEE Trans. \textbf{7} (1998)  310 --318

\bibitem{Renner2003}
Renner, A.I.:
\newblock Anisotropic Diffusion in Riemannian Colour Space.
\newblock PhD thesis, Ruprecht-Karls-Universit{\"a}t, Heidelberg (2003)

\bibitem{918563}
Tang, B., Sapiro, G., Caselles, V.:
\newblock Color image enhancement via chromaticity diffusion.
\newblock IEEE Trans. Image Processing \textbf{10} (2001)  701 --707

\bibitem{diva2:424137}
{\AA}str{\"o}m, F., Felsberg, M., Lenz, R.:
\newblock {Color Persistent Anisotropic Diffusion of Images}.
\newblock In: Image Analysis. Volume 6688 of LNCS., Springer (2011)  262--272

\bibitem{Lenz2010}
Lenz, R., Carmona, P.L.:
\newblock Hierarchical s(3)-coding of rgb histograms.
\newblock In: Selected papers from VISAPP 2009. Volume~68.
\newblock Springer (2010)  188--200

\bibitem{springerlink:10.1007/BF00131148}
Black, M.J., Rangarajan, A.:
\newblock On the unification of line processes, outlier rejection, and robust
  statistics with applications in early vision.
\newblock IJCV \textbf{19} (1996)  57--91

\bibitem{1238435}
Scharr, H., Black, M., Haussecker, H.:
\newblock Image statistics and anisotropic diffusion.
\newblock In: Ninth IEEE ICCV. (2003)  840 --847 vol.2

\bibitem{5539959}
Krajsek, K., Scharr, H.:
\newblock Diffusion filtering without parameter tuning: Models and inference
  tools.
\newblock In: CVPR. (2010)  2536 --2543

\bibitem{Baravdish11}
Baravdish, G., Svensson, O.:
\newblock Image reconstruction with p(x)-parabolic equation.
\newblock In: 7th ICIPE, Orlando Florida. (2011)

\bibitem{5696757}
Felsberg, M.:
\newblock Autocorrelation-driven diffusion filtering.
\newblock IEEE Trans. Image Processing \textbf{20} (2011)  1797--1806

\bibitem{4378954}
Dabov, K., Foi, A., Katkovnik, V., Egiazarian, K.:
\newblock Color image denoising via sparse 3d collaborative filtering with
  grouping constraint in luminance-chrominance space.
\newblock In: Image Processing, IEEE International Conference. (2007)  313--316

\bibitem{Forstner87}
F{\"o}rstner, W., G{\"u}lch, E.:
\newblock A fast operator for detection and precise location of distinct
  points, corners and centres of circular features.
\newblock In: ISPRS Intercommission, Workshop, Interlaken, pp. 149-155. (1987)

\bibitem{Weickert_2002}
Weickert, J.:
\newblock A scheme for coherence-enhancing diffusion filtering with optimized
  rotation invariance.
\newblock VCIR \textbf{13} (2002)  103--118

\bibitem{Wolfgang2000}
F{\"o}rstner, W.:
\newblock Image preprocessing for feature extraction in digital intensity,
  color and range images.
\newblock In: Geomatic Method for the Analysis of Data in the Earth Sciences.
  Volume~95 of LNES.
\newblock (2000)  165--189

\bibitem{MartinFTM01}
Martin, D., Fowlkes, C., Tal, D., Malik, J.:
\newblock A database of human segmented natural images and its application to
  evaluating segmentation algorithms and measuring ecological statistics.
\newblock In: Proc. ICCV. Volume~2. (2001)  416--423

\bibitem{Wang2004}
Wang, Z., Bovik, A., Sheikh, H., Simoncelli, E.:
\newblock Image quality assessment: from error visibility to structural
  similarity.
\newblock IEEE Trans. Image Processing \textbf{13} (2004)  600--612

\end{thebibliography}
